\newtheorem{theorem}{Theorem}
\title{Efficient line search for optimizing Area Under the ROC Curve in gradient descent}
\author{%
  Jadon Fowler \\
  \texttt{j@jadon.io}\\
\\
  Toby Dylan Hocking \\
  Département d'Informatique \\
  Université de Sherbrooke \\
  \texttt{toby.dylan.hocking@usherbrooke.ca} \\
}
\begin{document}

\maketitle

\begin{abstract}
Receiver Operating Characteristic (ROC) curves are useful for evaluation in binary classification and changepoint detection, but difficult to use for learning since the Area Under the Curve (AUC) is piecewise constant (gradient zero almost everywhere).
Recently the Area Under Min (AUM) of false positive and false negative rates has been proposed as a differentiable surrogate for AUC.
In this paper we study the piecewise linear/constant nature of the AUM/AUC, and propose new efficient path-following algorithms for choosing the learning rate which is optimal for each step of gradient descent (line search), when optimizing a linear model.
Remarkably, our proposed line search algorithm has the same log-linear asymptotic time complexity as gradient descent with constant step size, but it computes a complete representation of the AUM/AUC as a function of step size.
In our empirical study of binary classification problems, we verify that our proposed algorithm is fast and exact; in changepoint detection problems we show that the proposed algorithm is just as accurate as grid search, but faster.
\end{abstract}
\section{Introduction}
\label{sec:introduction}
In supervised machine learning problems such as binary classification \citep{cortes2004auc} and changepoint detection \citep{Hocking2013icml}, the goal is to learn a function that is often evaluated using a Receiver Operating Characteristic (ROC) curve, which is a plot of True Positive Rate (TPR) versus False Positive Rate (FPR)\citep{egan1975signal}.
For data with $n$ labeled examples, a predicted value $\hat y_i\in\mathbb R$ is computed for each labeled example $i\in\{1,\dots,n\}$, and in binary classification the threshold of zero is used to classify as either positive ($\hat y_i>0$, True Positive=TP for a positive label, False Positive=FP for a negative label) or negative ($\hat y_i<0$, True Negative=TN for a negative label, False Negative=FN for a positive label).
Computing overall true positive and false positive rates yields a single point in ROC space, and the different points on the ROC curve are obtained by adding a real-valued constant $c\in\mathbb R$ to each predicted value $\hat y_i$ (Figure~\ref{fig:roc-curves}).
Large constants $c$ result in FPR=TPR=1 and small constants result in FPR=TPR=0; 
a perfect binary classification model has an AUC of 1, and a constant/bad model has an AUC of 0.5.

While AUC is often used as the evaluation metric in machine learning, it can not be used to compute gradients, because it is a piecewise constant function of predicted values.
Recently, \citet{hillman2023optimizing} proposed the AUM, or Area Under Min(FP,FN), as a surrogate loss, and showed that minimizing the AUM results in AUC maximization, in unbalanced binary classification and supervised changepoint detection.
More specifically, minimizing the AUM encourages points on the ROC curve to move to the upper left (Figure~\ref{fig:roc-curves}). 
In this paper, we propose a new gradient descent learning algorithm, which uses the gradients of the AUM loss, with a line search for either minimizing AUM or maximizing AUC (on either the subtrain or validation set).

\subsection{Contributions and organization}

Our main contribution is a new log-linear algorithm for efficiently computing a line search to determine 
the optimal step size (learning rate) in each step of gradient descent, when learning linear model with the AUM loss.
In Section~\ref{sec:model}, we define the AUM line search problem, then in Section~\ref{sec:algorithms}, we prove efficient update rules for computing changes in AUM and AUC, which results in a complete representation of the piecewise linear/constant AUM/AUC as a function of step size.
Remarkably, even though AUC can not be used to compute gradients (since it is piecewise constant), we show that it can be used as the criterion to maximize (on either the subtrain or validation set) during the computationally efficient log-linear line search algorithm.
In Section~\ref{sec:results}, we provide an empirical study of supervised binary classification and changepoint detection problems, showing that our new line search algorithm is fast (sub-quadratic), and just as accurate as a standard/slow grid search.
Section~\ref{sec:discussion} concludes with a discussion of the significance and novelty of our findings.

\begin{figure*}[t]
\vskip 0.2in
\begin{center}
\includegraphics[width=0.8\textwidth]{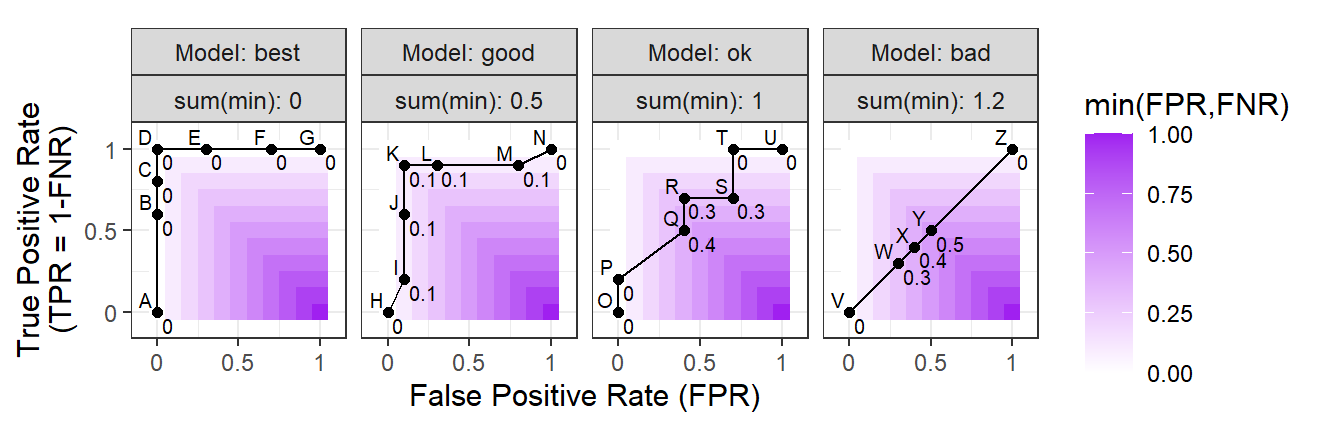}

\hskip 0.45cm
\includegraphics[width=0.8\textwidth]{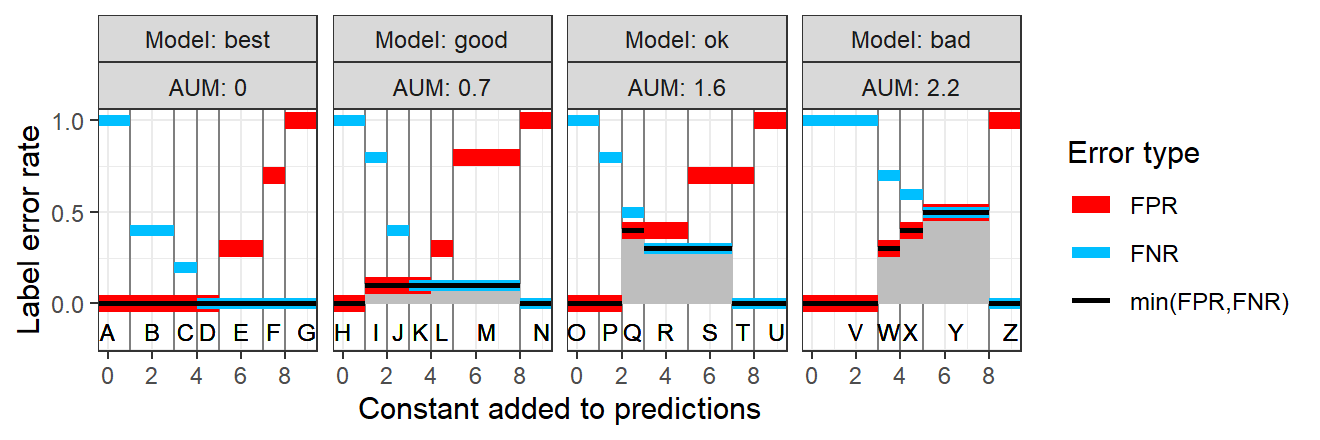}
\vskip -0.4cm
\caption{For four binary classification models,
there is one letter, A--Z, for each ROC point (top), and corresponding interval of constants added to predictions (bottom).
Number next to each ROC point shows min(FPR,FNR)
(same as purple heat map values, and black curve in bottom plot), which is minimal (0) when AUC is maximal (1).
The proposed algorithm is for minimizing the AUM, Area Under Min(FPR,FNR) (grey shaded region in bottom plot), which is a differentiable surrogate for the sum of min(FPR,FNR) over all points on the ROC curve (sum(min) values shown in top panel titles).
}
\label{fig:roc-curves}
\end{center}
\vskip -0.2in
\end{figure*}

\section{Related work}
\label{sec:related-work}

There are two groups of approaches to dealing with class imbalance in binary classification: re-weighting the typical logistic/cross-entropy loss, and using other loss functions (typically defined on pairs of positive and negative examples).
Re-weighting approaches include under-sampling and over-sampling \citep{fernandez2018smote}, and modifying the typical logistic/cross-entropy loss to account for expected level of balance/imbalance in the test set \citep{Menon2013}.  
Several algorithms are based on the idea of alternative pairwise loss functions, including some based on a surrogate like the squared hinge summed over pairs \citep{yuan2020auc,yuan2023libauc,rust2023log}.
Our proposed algorithm is based on the AUM loss \citep{hillman2023optimizing}, which can be categorized as an alternative loss function.
Although it is not defined on pairs, it instead requires a sorting of predicted values (similar to ROC curve computation).

Our proposed line search algorithm is similar to the idea of path-following (homotopy) algorithms, which are a popular topic in the statistical machine learning research literature. 
Classic homotopy algorithms include the LARS algorithm for the LASSO and the SVMpath algorithm for the Support Vector Machine, which compute the piecewise linear paths of optimal model parameters as a function of regularization parameter \citep{efron2004least,Dai2013}. 
Similar path-following algorithms include the fused lasso for segmentation and clusterpath for convex clustering \citep{hoefling2010path, hocking2011clusterpath}.
Whereas these algorithms compute a complete path of optimal solutions as a function of the regularization parameter, our proposed algorithm computes a complete path of AUC/AUM values as a function of the step size (learning rate) in an iteration of gradient descent. 

\section{Models and Definitions}
\label{sec:model}

\paragraph{In supervised binary classification,} we are given a set of $n$ labeled training examples, $\{(\mathbf x_i, y_i)\}_{i=1}^n$ where $\mathbf x_i\in\mathbb R^p$ is an input feature vector for one example and $y_i\in\{-1,1\}$ is a binary output/label.
The goal of binary classification is to learn a function $f:\mathbb R^p\rightarrow \mathbb R$ which is used to compute real-valued predictions $\hat y_i=f(\mathbf x_i)$ with the same sign as the corresponding label $y_i$.
Whereas typically the logistic/cross-entropy loss is used for learning $f$, our proposed algorithm uses the AUM loss \citep{hillman2023optimizing}. 

\begin{figure*}[t]
\vskip 0.2in
\begin{center}
\includegraphics[height=1.7in]{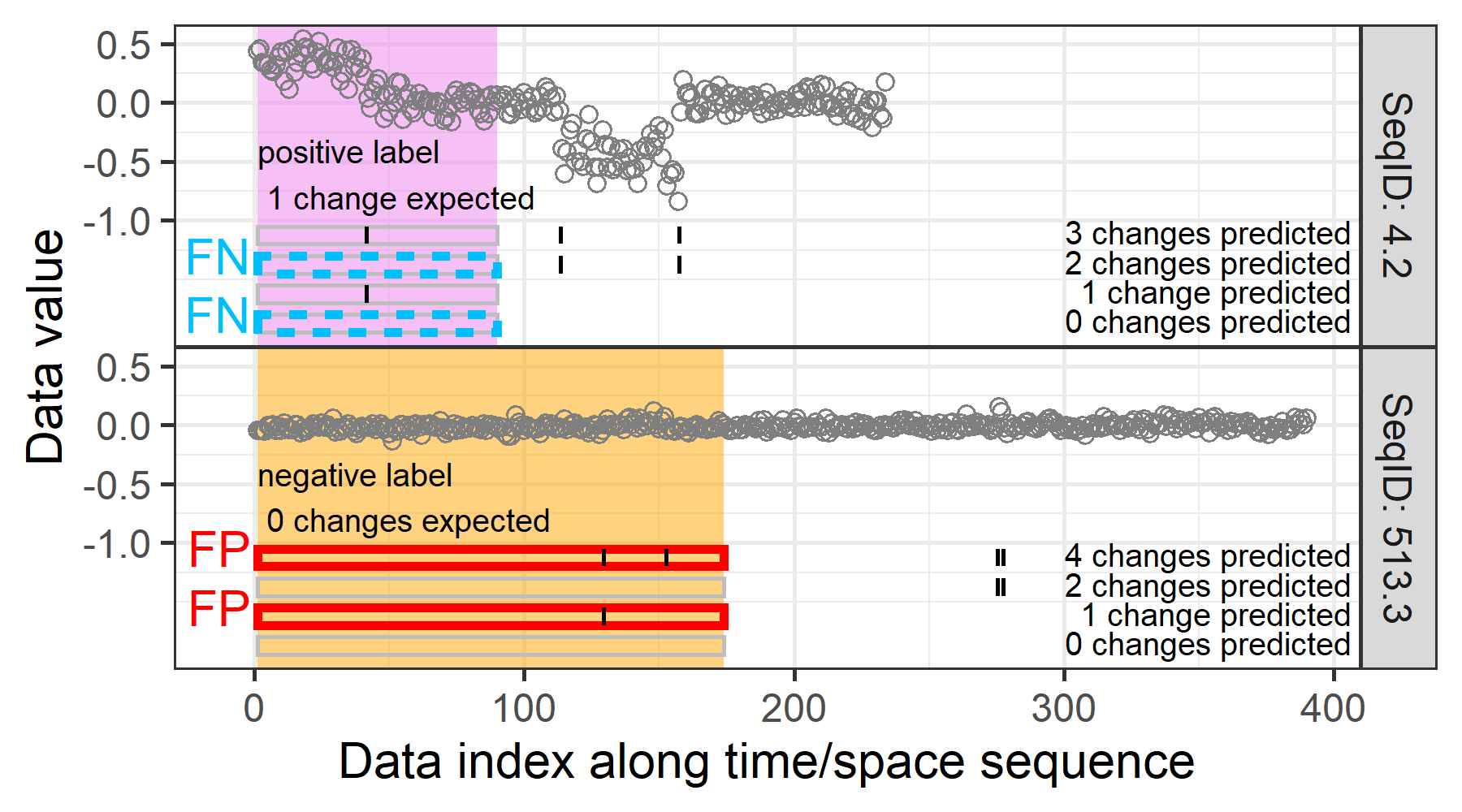}
\includegraphics[height=1.7in]{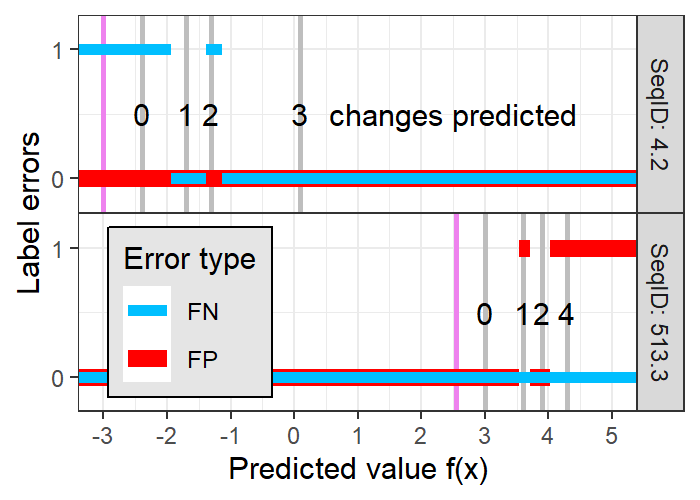}
\vskip -0.4cm
\caption{Two labeled changepoint problems (left), and corresponding error functions (right).
In these two problems, the FN/FP error functions are non-monotonic, because a changepoint disappears when moving from model size 1 to 2.
Vertical purple lines (right) mark predicted values which result in the line search shown in Figure~\ref{fig:line-search-example}.
}
\label{fig:two-labeled-changepoint}
\end{center}
\vskip -0.2in
\end{figure*}

\paragraph{In supervised changepoint detection,} we assume for each labeled training example $i\in\{1,\dots,n\}$ there is a corresponding data sequence vector $\mathbf z_i$ and label set $L_i$ \citep{Hocking2013icml}. 
For example in Figure~\ref{fig:two-labeled-changepoint} we show two data sequences, each with a single label.
Dynamic programming algorithms can be used on the data sequence $\mathbf z_i$ to compute optimal changepoint models for different numbers of changepoints $\{0,1,\dots\}$  \citep{Maidstone2016}.
For example in Figure~\ref{fig:two-labeled-changepoint} (left) we show models with 0--4 changepoints.
The label set $L_i$ can be used to compute the number of false positive and false negative labels with respect to any predicted set of changepoints (false positives for too many changepoints, false negatives for not enough changepoints).
Each example $i$ also has a model selection function $\kappa^*_i:\mathbb R^+_0 \rightarrow \{0,1,\dots\}$ which maps a non-negative penalty value $\hat \lambda_i$ to a selected number of changepoints $\kappa^*_i(\hat \lambda_i)$, and corresponding FP/FN error values (Figure~\ref{fig:two-labeled-changepoint}, right).
We assume there is a fixed feature map $\phi$ which can be used to compute a feature vector $\mathbf x_i = \phi(\mathbf z_i)\in\mathbb R^p$ for each labeled example.
We want to learn a function $f:\mathbb R^p\rightarrow \mathbb R$ which inputs a feature vector and outputs a real-valued prediction that is used as a negative log penalty value, $f(\mathbf x_i) = -\log \hat \lambda_i$.
The goal is to predict model sizes $\kappa^*_i(\hat \lambda_i)$ that result in minimal label errors or AUC=1.
Note the FP/FN label error functions may be non-monotonic (Figure~\ref{fig:two-labeled-changepoint}, right), which means that the ROC curve may be non-monotonic, with loops/cycles, and AUC outside the typical range of [0,1], such as AUC=2 shown in Figure~\ref{fig:line-search-example} \citep{hillman2023optimizing}.
Whereas typical loss functions used for learning $f$ are based on regression with interval censored outputs \citep{Hocking2013icml,Drouin2017,barnwal2021aftxgboost}, our proposed algorithm uses the AUM loss \citep{hillman2023optimizing}.

\subsection{Definition of false positive and false negative functions}
\label{sec:def-fp-fn}

In this paper, we assume the following general learning context in which supervised binary classification and changepoint detection are specific examples. 
For each labeled training observation $i$, we have a predicted value $\hat y_i=f(\mathbf x_i)\in\mathbb R$, and one or more labels which let us compute the contribution of this observation to the false negative rate $\text{FN}_i(\hat y_i)\in[0,1]$ and false positive rate $\text{FP}_i(\hat y_i)\in[0,1]$ (for example, Figure~\ref{fig:two-labeled-changepoint}, right). 
The $\text{FP}_i$ starts at zero (no false positives for very small predicted values), the $\text{FN}_i$ ends at zero (no false negatives for very large predictive values).
These functions are piecewise constant, so can be represented by breakpoint tuples $(v,\Delta\text{FP},\Delta\text{FN})$, where $v\in\mathbb R$ is a predicted value threshold where there are changes $\Delta\text{FP},\Delta\text{FN}$ (discontinuity) in the error functions.
In binary classification with $n^+$ positive examples and $n^-$ negative examples, these functions can be exactly represented by the breakpoint
$(v=0,\Delta\text{FP}=0,\Delta\text{FN}=-1/n^+)$ for all positive examples, and 
$(v=0,\Delta\text{FP}=1/n^-,\Delta\text{FN}=0)$ for all negative examples.
In supervised changepoint detection, there can be more than one breakpoint per error function (for example, Figure~\ref{fig:two-labeled-changepoint}, right, shows three breakpoints per error function).
These breakpoints will be used in our proposed learning algorithm, since they give information about how the predicted values affect the ROC curve.

\begin{figure*}[t]
\begin{center}
\parbox{0.6\textwidth}{
\hspace{0.0cm}
\includegraphics[width=0.5\textwidth]{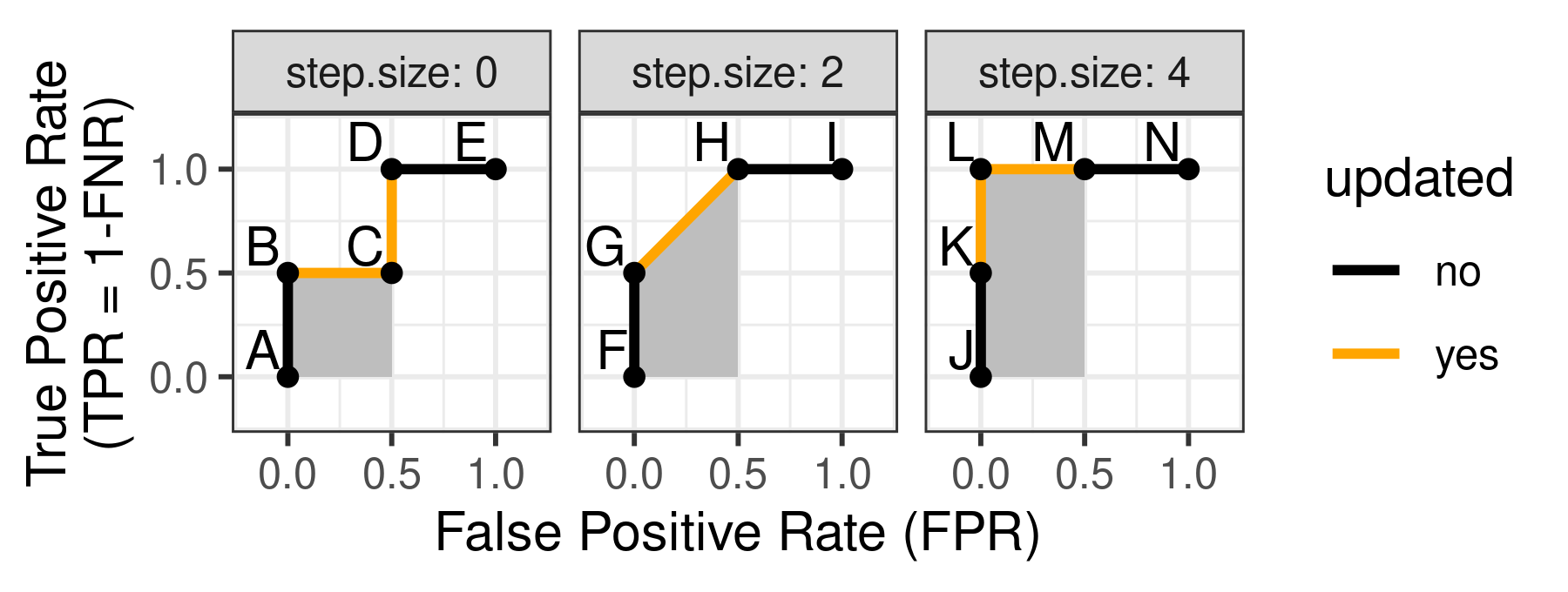}
\\
\includegraphics[width=0.6\textwidth]{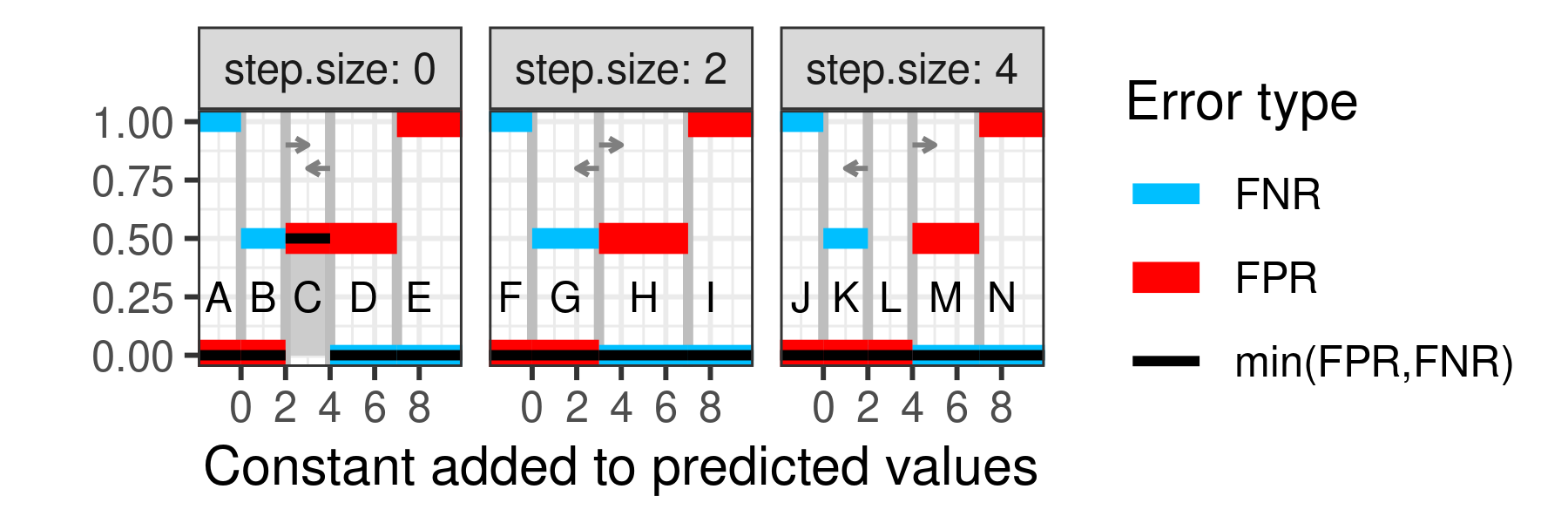}
}\parbox{0.39\textwidth}{
\includegraphics[width=0.35\textwidth]{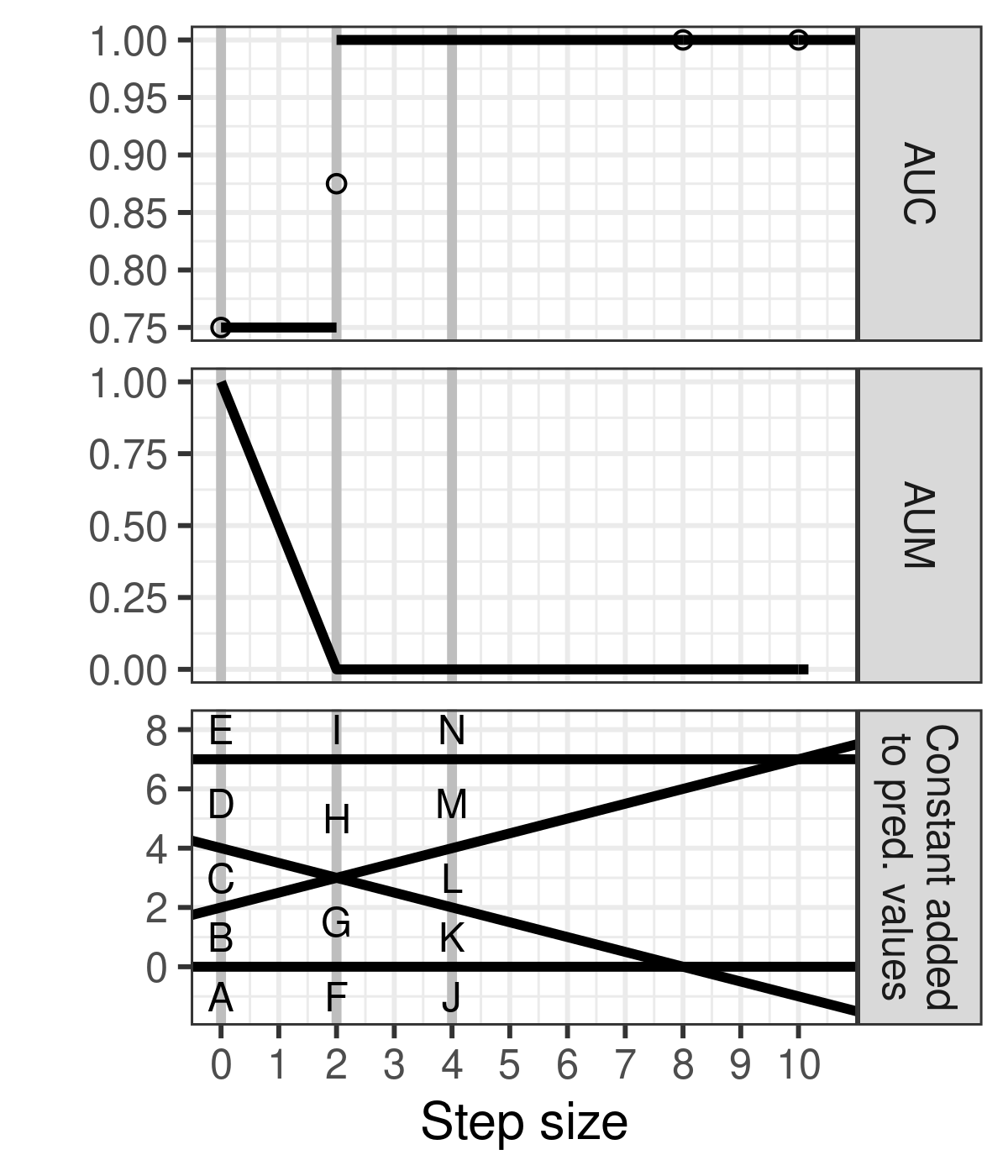}
}
\vskip -0.3cm
\caption{
Demonstration of proposed line search algorithm, for a simple binary classification problem with four data.
\textbf{Top left:} ROC curves at three step sizes, with shaded grey area showing parts of AUC involved in the update rules (\ref{eq:AUC_without}--\ref{eq:AUC_after}).
\textbf{Bottom left:} error rate functions at three step sizes, with grey arrows showing the gradient, and shaded grey area (C) showing the AUM, Area Under Min(FPR,FNR). 
\textbf{Right:} AUC, AUM, and threshold functions $T_b(s)$ (black lines), as a function of step size.
There is one letter for every ROC point, corresponding to an interval of constants added to predicted values at a given step size.
}
\label{fig:line-search-example-binary-roc}
\end{center}
\vskip -0.2in
\end{figure*}



\subsection{Linear model predictions and errors as a function of step size}

Let there be a total of $B$ breakpoints in the error functions over all $n$ labeled training examples, where each breakpoint $b\in\{1,\dots, B\}$ is represented by the tuple $(v_b, \Delta\text{FP}_b, \Delta\text{FN}_b, \mathcal I_b)$.
The $\mathcal I_b\in\{1,\dots,n\}$ is an example index, so there are changes $\Delta\text{FP}_b, \Delta\text{FN}_b$ at predicted value $v_b\in\mathbb R$ in the error functions $\text{FP}_{\mathcal I_b},\text{FN}_{\mathcal I_b}$.
For example the labeled data sequences shown in Figure~\ref{fig:two-labeled-changepoint} represent $n=2$ labeled training examples, with $B=6$ breakpoints total.
For a linear model $f(\mathbf{x}_i)=\mathbf w^T \mathbf{x}_i$,
we can compute a descent direction $\mathbf d\in \mathbb R^p$ based on the directional derivatives of the AUM \citep{hillman2023optimizing}, which gives the following predictions, as a function of step size $s$ (learning rate), 
\begin{equation}
    \hat y_i(s)=(\mathbf w+s\mathbf d)^T \mathbf{x}_i.
\end{equation}
For each breakpoint $b$, we define the following function, which tells us how its threshold evolves as a function of step size, in the plot of error rates (Figures \ref{fig:roc-curves} and \ref{fig:line-search-example-binary-roc}) as a function of the constant $c$ added to predicted values in ROC curve computation.
\begin{equation}
    T_b(s) = v_b - \hat y_{\mathcal I_b}(s) =
    v_b - (\mathbf w+s\mathbf d)^T \mathbf x_{\mathcal I_b}.
\end{equation}
We can see from the equation above that $T_b(s)$ is a linear function with slope $\delta_b = - \mathbf d^T \mathbf x_{\mathcal I_b}$ and intercept $\epsilon_b=v_b - \hat y_{\mathcal I_b}(0) = v_b - \mathbf w^T \mathbf x_{\mathcal I_b}$.
This equation can be used to plot the threshold $T_b(s)$ as a function of the step size $s$ (Figures~\ref{fig:line-search-example-binary-roc}--\ref{fig:line-search-example}).
Given the $B$ observation error breakpoints, a prediction vector $\mathbf{\hat y}=[\hat y_1 \cdots \hat y_n]^\intercal\in\mathbb R^n$, and a descent direction $\mathbf d$, we define the error functions
\begin{equation}
  \underline{\text{FP}}_b(s) = \sum_{j: T_j(s) < T_b(s)} \Delta\text{FP}_j,\ \ \ \ \ \ \ \  \label{eq:first-fp-fn-over-under} 
  \underline{\text{FN}}_b(s) = \sum_{j: T_j(s) \geq T_b(s)} - \Delta\text{FN}_j.
\end{equation}
The $\underline{\text{FP}}_b(s), \underline{\text{FN}}_b(s)\in[0,1]$ are the error rates before the threshold $T_b(s)$, in the plot of label error rates as a function of the constant $c$ added to predicted values (for example, Figure~\ref{fig:line-search-example-binary-roc}, bottom left). 
Additionally we define $\underline{M}_b(s) = \min\{ 
    \underline{\text{FP}}_b(s),
    \underline{\text{FN}}_b(s)
    \}$ which is the min of the two error rates.
\section{Proposed line search algorithm}
\label{sec:algorithms}

In this section, we provide theorems which state the update rules of our proposed algorithm.

\subsection{Initialization}

\begin{figure*}[t]
\vskip 0.2in
\begin{center}
\parbox{0.65\textwidth}{
\includegraphics[width=0.65\textwidth]{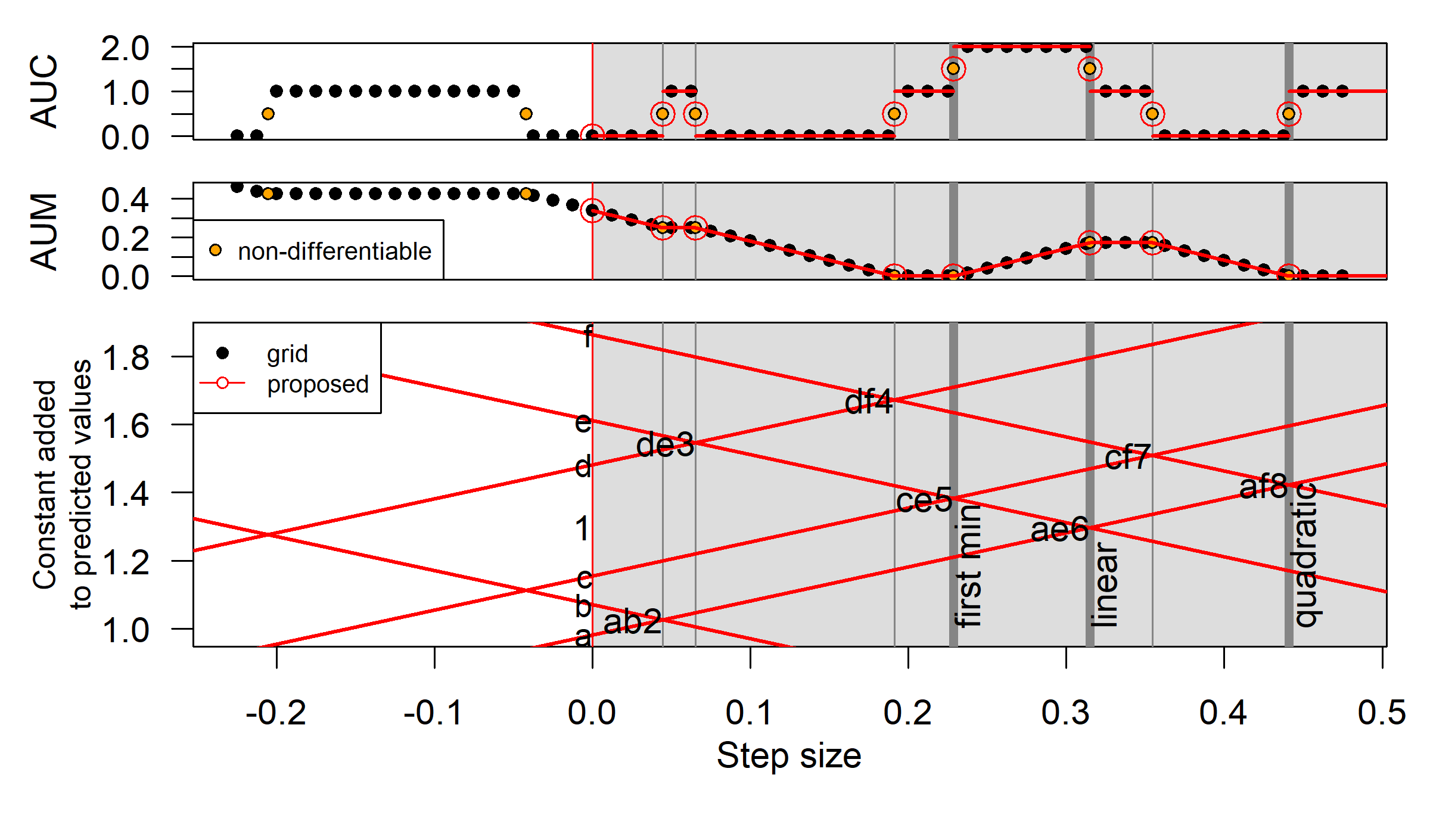}
}
\parbox{0.155\textwidth}{
Red-black tree of possible next intersections:
\vskip 0.1cm
\begin{tabular}{rl}
  \hline
it & next \\ 
  \hline
  1 & ab,de \\ 
    2 & de \\ 
    3 & df,ce \\ 
    4 & ce \\ 
    5 & ae,cf \\ 
    6 & cf \\ 
    7 & af \\ 
    8 &  \\ 
   \hline
\end{tabular}
}
\vskip -0.5cm
\caption{
Demonstration of proposed line search algorithm, for the same two labeled changepoint data sequences as in Figure~\ref{fig:two-labeled-changepoint}.
It starts by computing AUM/AUC at step size 0 (vertical red line, iteration 1), and storing the next possible intersection points in a red-black tree (right table).
Iteration 2 removes the intersection point with the smallest step size (ab), resulting in a change of AUM slope (from -2 to 0), and a change of AUC values (from 0 to 0.5 at the intersection point, then to 1 after), and no new intersection points.
Three vertical grey lines represent variants with different stopping rules: first min is the smallest step size such that AUM would increase for larger step sizes, 
linear is the same number of iterations as the number of red lines in the threshold plot (6 lines: a--f), and quadratic means to explore all positive step sizes (shaded grey area).
Note AUC can be larger than 1 because there are cycles/loops in the ROC curve, due to non-monotonic label error functions.
}
\label{fig:line-search-example}
\end{center}
\vskip -0.2in
\end{figure*}

Let $K>1$ be the max number of iterations, and let $k\in\{1,2,\dots,K\}$ be the counter for the iterations of our proposed algorithm.
For each iteration $k$, there is a step size $\sigma_k$, and the initial step size is $\sigma_1=0$.
The proposed algorithm computes an exact representation of the piecewise constant error rates (FPR,FNR) as a function of the constant $c$ added to predicted values (same as in ROC curve computation, see Figure~\ref{fig:roc-curves}). 
At each step size, there are $B$ error rates (FPR,FNR) which must be computed, one for each breakpoint $b\in\{2,\dots,B\}$ in label error functions.
We use the notation $\text{FP}_b^k$ to denote the false positive rate before breakpoint $b$, at iteration $k$.
Note that we use the superscript $k$ to clarify the presentation of the update rules in this paper, but the algorithm only stores values for the current $k$, using $O(B)$ storage. 
Let $\mathbf q^1 = (q^1_1,\dots,q^1_B)$ be a permutation of $(1,\dots,B)$ such that the threshold functions are sorted, 
$T_{q^1_1}(0)\leq \cdots \leq T_{q^1_B}(0)$.
We also have for all $b\in\{2,\dots,B\}$, initialize
$\text{FP}^1_b=\underline{\text{FP}}_{q^1_b}(0)$ and 
$\text{FN}^1_b=\underline{\text{FN}}_{q^1_b}(0)$ (also $\text{TP}^1_b=1-\text{FN}^1_b$ and 
$M^1_b=\min\{\text{FP}^1_b, \text{FN}^1_b\}$) which is possible in log-linear time, by first sorting by threshold values, $T_{q_b^1}(0)$, then using a cumulative sum (\ref{eq:first-fp-fn-over-under}).
Note that these initializations start at index 2 and end at index $B$; the first index is missing because the Min below the first interval is always zero (by assumption that the False Positive Rate starts at zero).
Similarly, the Min after the last interval is always zero, by assumption that the False Negative Rate ends at zero. 
That is, for any iteration $k$, we have $M_1^k=0$ and $M_{B+1}^k=0$.
In the first iteration, we compute the AUC at step size 0 using the trapezoidal area, (area of triangle + area of rectangle under each segment of the ROC curve)
\begin{equation}
    \text{AUC}^1_{\text{after}} = 
    \sum_{b=1}^{B}
    (\text{FP}^1_{b+1}-\text{FP}^1_b)
    (\text{TP}^1_{b+1}+\text{TP}^1_b)/2.
    \label{eq:AUC1}
\end{equation}
Also, we compute the initial AUM via
\begin{equation}
\label{eq:AUM_init}
    \text{AUM}^1 = \sum_{b=2}^B [T_{q_b^1}(0) - T_{q_{b-1}^1}(0)] M^1_b,
\end{equation}
and its initial slope as a function of step size is
\begin{eqnarray}
  D^1 &=& \sum_{b=2}^B [\delta_{q^1_b} - \delta_{q_{b-1}^1}] M^1_b. \label{eq:D1}
\end{eqnarray}
\subsection{Update rules for error functions}
The initialization is valid for any step sizes $s\in(0,\sigma_2)$, where $\sigma_2$ is the smallest step size such that the permutation $\mathbf q$ is no longer a valid ordering of the threshold functions (there is an intersection between two or more threshold functions $T_b$ at $\sigma_2$).
More generally, for any iteration $k\in\{2,\dots,\}$, we assume that at step size $\sigma_k$, there is an intersection, $T_{q^{k}_\beta}(\sigma_k) = T_{q^{k}_{\beta-1}}(\sigma_k)$, and $\beta$ is the index of the function which is larger before the intersection. 
Then the update for the permutation is
for all $b\in\{1,\dots,B\}$
\begin{equation}
    q^{k}_b = \begin{cases}
    q^{k-1}_{b-1} &\text{ if } b=\beta,\\
    q^{k-1}_{b+1} &\text{ if } b=\beta-1,\\
    q^{k-1}_b & \text{ otherwise.}
    \end{cases}
\end{equation}
For any iteration $k\in\{2,\dots\}$ with intersection point at step size $\sigma_k$, this update gives a new permutation $\mathbf q^k = (q^k_1,\dots,q^k_B)$ such that for all $s\in[\sigma_{k}, \sigma_{k+1}]$ we have
\begin{eqnarray}
T_{q^k_1}(s)\leq &\cdots& \leq T_{q^k_B}(s) \label{eq:T_order}
\end{eqnarray}
Above (\ref{eq:T_order}) means that the permutation $\mathbf q^k$ results in the $T_b(s)$ functions being sorted for all step sizes $s\in[\sigma_{k},\sigma_{k+1}]$ before the next intersection point $\sigma_{k+1}$.
We would like to compute AUM for any $s\in[\sigma_{k},\sigma_{k+1}]$ via a linear function,
\begin{equation}
\label{eq:AUM_s}
    A(s) = \sum_{b=2}^B [T_{q_b^k}(s) - T_{q_{b-1}^k}(s)] M^k_b = \text{AUM}^k + sD^k.
\end{equation}
The line search problem is to minimize this function, $\min_{s>0} A(s)$.
For any $s\in[\sigma_{k},\sigma_{k+1}]$, $A(s)$ is a linear function with intercept is $\text{AUM}^k$ and slope $D^k$.
The main insight of our algorithm is that there is a constant $O(1)$ time update rule for computing slopes $D^k$ at subsequent iterations $k\in\{2,\dots\}$.
The algorithm must keep track of FP/FN vectors (of size $B-1$), which can be updated for all $b\in\{2,\dots,B\}$ via ($I$ is the indicator function, 1 if argument true, else 0)
\begin{eqnarray}
  \text{FP}^{k+1}_b &=& \text{FP}^k_b + I[b=\beta] [\Delta\text{FP}_{q^k_\beta} - \Delta\text{FP}_{q^k_{\beta-1}}], \\
  \text{FN}^{k+1}_b &=& \text{FN}^k_b + I[b=\beta] [\Delta\text{FN}_{q^k_\beta} - \Delta\text{FN}_{q^k_{\beta-1}}]. 
\end{eqnarray}
The equation above says that the only entry that needs to be updated in the FP/FN vectors is $\beta$ (which was the index of the function which was larger before the intersection at step size $\sigma_k$).
After updating FP/FN the new min can also be efficiently computed for all $b\in\{2,\dots,B\}$ via
\begin{equation}
    M^{k+1}_b = \begin{cases}
    \min\{\text{FP}_b^{k+1}, \text{FN}_b^{k+1}\} &\text{ if } b=\beta,\\
    M^k_b & \text{ otherwise.}
    \end{cases}
\end{equation}
The equation above says that all the min values stay the same except entry $\beta$.

\subsection{Update rules for AUM and AUC}


Next, we state our first main result, the constant time update rule for the AUM slope $D^k$.
\begin{theorem}
For data with $B$ breakpoints in label error functions, the initial AUM slope is computed via~(\ref{eq:D1}) in log-linear $O(B\log B)$ time. 
If $\beta\in\{2,\dots,B\}$ is the index of the function $T_\beta$ which is larger before an intersection at step size $\sigma_{k+1}$, then the next AUM slope $D^{k+1}$ can be computed from the previous $D^{k}$ in constant $O(1)$ time, using~(\ref{eq:D_update}).
\begin{eqnarray}
    D^{k+1} &=& D^k 
    +\left(
    \delta_{q_\beta^k}- \delta_{q_{\beta-1}^k}
    \right)
    \left(
    I[\beta<B]M_{\beta+1}^k+ M_{\beta-1}^k
    -M_{\beta}^k - M_\beta^{k+1}
    \right).\label{eq:D_update}
\end{eqnarray}
\end{theorem}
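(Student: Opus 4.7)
I would split the claim into the initialization cost and the one-step update formula, and handle each separately.

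For the initialization, equation~(\ref{eq:D1}) is a sum of $B-1$ terms, each requiring only a subtraction of two $\delta$ values and a multiplication by a stored $M^1_b$. The dominant cost is producing the permutation $\mathbf q^1$ that sorts the initial thresholds $T_b(0)=\epsilon_b$, which is $O(B\log B)$; the subsequent linear pass over $b$ is $O(B)$, giving $O(B\log B)$ in total.

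For the update, my plan is to pin down precisely which summands of $D^k=\sum_{b=2}^B[\delta_{q_b^k}-\delta_{q_{b-1}^k}]M^k_b$ can change from iteration $k$ to $k+1$. By the permutation update rule, $q_b^{k+1}=q_b^k$ except at $b\in\{\beta-1,\beta\}$, where the two entries swap; by the $M$ update rule, $M_b^{k+1}=M_b^k$ except at $b=\beta$. Since the $b$-th summand depends only on $q_b$, $q_{b-1}$, and $M_b$, the only summands whose values can change are those with $b\in\{\beta-1,\beta,\beta+1\}$. Writing $a=q_{\beta-1}^k$ and $c=q_\beta^k$, a direct computation of each of the three differences gives
\begin{align*}
\Delta_{\beta-1}&=(\delta_c-\delta_a)\,M^k_{\beta-1},\\
\Delta_\beta&=-(\delta_c-\delta_a)\,(M^k_\beta+M^{k+1}_\beta),\\
\Delta_{\beta+1}&=(\delta_c-\delta_a)\,M^k_{\beta+1}\text{ when }\beta+1\le B.
\end{align*}
Adding these and factoring out $\delta_c-\delta_a=\delta_{q_\beta^k}-\delta_{q_{\beta-1}^k}$ yields (\ref{eq:D_update}) exactly.

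Two boundary subtleties do the bookkeeping. When $\beta=B$ the term $\Delta_{\beta+1}$ is absent, which is the role of $I[\beta<B]$ in front of $M^k_{\beta+1}$; when $\beta=2$ the term $\Delta_{\beta-1}$ formally drops out because $b=1$ is outside $\sum_{b=2}^B$, and the expression remains correct under the convention $M^k_1=0$ that the paper has already fixed. The constant-time claim is then immediate: each quantity in~(\ref{eq:D_update}) is either a precomputed $\delta$ or an $M^k,M^{k+1}$ entry already maintained by the algorithm, so the update uses $O(1)$ arithmetic. The only delicate step I foresee is verifying the three differences above without double-counting the shared index $\beta$ (the $\delta$ at position $\beta$ appears in both the $b=\beta$ and $b=\beta+1$ summands); once that is clear the rest is mechanical.
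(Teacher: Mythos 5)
Your proposal is correct and follows the same route as the paper's proof, which simply writes $D^{k+1}-D^k$ as the difference of the two sums and elides the intermediate algebra with a ``$\vdots$''; you fill in exactly those elided steps by isolating the three affected summands $b\in\{\beta-1,\beta,\beta+1\}$, and your three differences $\Delta_{\beta-1},\Delta_\beta,\Delta_{\beta+1}$ check out against the swap $q^{k+1}_{\beta-1}=q^k_\beta$, $q^{k+1}_\beta=q^k_{\beta-1}$ and the single changed minimum $M^{k+1}_\beta$. Your handling of the boundary cases ($I[\beta<B]$ for $\beta=B$, and $M^k_1=0$ for $\beta=2$) is also consistent with the paper's conventions, so nothing is missing.
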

\begin{proof}
The result can be derived by writing the terms in $D^{k+1}$ and $D^k$, then subtracting:
\begin{align}
\label{eq:D_diff}
    D^{k + 1} - D^k &= \left(\sum_{b=2}^B [\delta_{q^{k+1}_b} - \delta_{q_{b-1}^{k+1}}] M^{k+1}_b\right) - \left(\sum_{b=2}^B [\delta_{q^k_b} - \delta_{q_{b-1}^k}] M^k_b\right)\\
    &\ \ \vdots\nonumber\\
    \label{eq:D_update_term}
    &=  \left(\delta_{q_\beta^k} - \delta_{q_{\beta-1}^k}\right) \left(I[\beta<B]M_{\beta+1}^k +  M_{\beta-1}^k - M_{\beta}^k - M_\beta^{k+1}\right).
\end{align}
Adding $D^k$ to both sides of the above equation yields the desired result.
\end{proof}

Next, we state the update rules for the AUC.
The important idea behind the update rule is that when $T_b$ threshold functions intersect at step size $\sigma_{k+1}$, that corresponds to removing a point from the ROC curve (Figure~\ref{fig:line-search-example-binary-roc}, step size 0, subtract the corresponding area to get $\text{AUC}_{\text{without}}^{k+1}$), and replacing it with a diagonal connecting the adjacent points (Figure~\ref{fig:line-search-example-binary-roc}, step size 2, adding new area to get $\text{AUC}_{\text{at}}^{k+1}$).
Additionally, after the intersection point, there is a new ROC point that appears to replace the removed/old point, and connects to the adjacent points (Figure~\ref{fig:line-search-example-binary-roc}, step size 4, adding new area to get $\text{AUC}_{\text{after}}^{k+1}$).
\begin{theorem}
For data with $B$ breakpoints in label error functions, the initial AUC is computed via~(\ref{eq:AUC1}) in log-linear $O(B\log B)$ time. 
If $\beta\in\{2,\dots,B\}$ is the index of the function $T_\beta$ which is larger before intersection at step size $\sigma_{k+1}$, then the new AUC values can be computed in constant $O(1)$ time using (\ref{eq:AUC_without}--\ref{eq:AUC_after}).
\end{theorem}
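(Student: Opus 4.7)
\medskip

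\noindent\textbf{Proof plan for Theorem 2.}
The plan is to mirror the strategy used for the AUM slope update in Theorem 1: split the work into a one-time initialization cost and a constant-time local correction at each intersection event. First I would dispatch the initialization claim. Since the $\text{FP}^1_b$ and $\text{TP}^1_b = 1-\text{FN}^1_b$ vectors are already available in $O(B)$ time once the breakpoints are sorted by $T_{q^1_b}(0)$ (an $O(B\log B)$ sort), the trapezoidal sum in~(\ref{eq:AUC1}) reduces to a single $O(B)$ pass, so the initial AUC cost is dominated by the sort and is $O(B\log B)$, as claimed.

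For the update, I would work directly from the geometric description already given in the paragraph preceding the theorem. When the threshold functions $T_{q_\beta^k}$ and $T_{q_{\beta-1}^k}$ cross at $\sigma_{k+1}$, only the ROC points indexed by $\beta-1$, $\beta$, and $\beta+1$ change; all other trapezoids in the sum~(\ref{eq:AUC1}) are unaffected. So I would isolate the three trapezoidal contributions involving index $\beta$ (those with base segments $[\text{FP}_{\beta-1},\text{FP}_\beta]$ and $[\text{FP}_\beta,\text{FP}_{\beta+1}]$) and build $\text{AUC}_{\text{without}}^{k+1}$ by subtracting these two trapezoids and replacing them with the single trapezoid over $[\text{FP}_{\beta-1},\text{FP}_{\beta+1}]$ using only $\text{TP}_{\beta-1}^k$ and $\text{TP}_{\beta+1}^k$. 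Next, $\text{AUC}_{\text{at}}^{k+1}$ is obtained by also accounting for the coincident/degenerate point at the intersection (where the candidate ROC vertex lies on the diagonal between the neighbors), and $\text{AUC}_{\text{after}}^{k+1}$ is obtained by reinserting the new ROC vertex with the updated $\text{FP}^{k+1}_\beta,\text{TP}^{k+1}_\beta$ and adding back the two trapezoids over $[\text{FP}^{k+1}_{\beta-1},\text{FP}^{k+1}_\beta]$ and $[\text{FP}^{k+1}_\beta,\text{FP}^{k+1}_{\beta+1}]$. Each of these steps touches a bounded number of $\text{FP}$ and $\text{TP}$ entries already maintained by the algorithm, giving the $O(1)$ bound.

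The main obstacle I expect is bookkeeping rather than any deep calculation: I must verify that the three update formulas correctly handle the boundary indices $\beta=2$ and $\beta=B$, where one of the neighboring trapezoids degenerates (analogous to the $I[\beta<B]$ term appearing in~(\ref{eq:D_update})), and I must be careful that the $\text{FP}$ and $\text{TP}$ values used in $\text{AUC}_{\text{without}}^{k+1}$, $\text{AUC}_{\text{at}}^{k+1}$, and $\text{AUC}_{\text{after}}^{k+1}$ are indexed with the correct iteration superscript, since $\text{FP}^{k+1}_\beta$ and $\text{FP}^k_\beta$ differ by exactly $\Delta\text{FP}_{q^k_\beta}-\Delta\text{FP}_{q^k_{\beta-1}}$ (and similarly for $\text{FN}$). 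Once these indexing conventions are pinned down, the telescoping of the trapezoidal sum is routine and the three claimed identities~(\ref{eq:AUC_without}--\ref{eq:AUC_after}) follow by direct substitution, completing the proof.
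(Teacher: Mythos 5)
Your proposal takes essentially the same route as the paper, whose entire proof is the single remark that the argument is analogous to the telescoping subtraction used for the AUM slope (\ref{eq:D_diff}--\ref{eq:D_update_term}); your observation that only the trapezoids adjacent to index $\beta$ change, so that the sort dominates initialization and each update touches $O(1)$ terms, is exactly that argument spelled out. One small indexing slip to fix before writing it up: you describe $\text{AUC}_{\text{without}}^{k+1}$ as both subtracting the two trapezoids at $\beta$ \emph{and} adding the single diagonal trapezoid over $[\text{FP}_{\beta-1}^k,\text{FP}_{\beta+1}^k]$, but per (\ref{eq:AUC_without}) it is only the subtraction; the diagonal trapezoid is added separately in (\ref{eq:AUC_at}) to form $\text{AUC}_{\text{at}}^{k+1}$, which is itself the AUC exactly at the intersection rather than a further correction on top of it.
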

\vskip -1cm
\begin{eqnarray}
\text{AUC}_{\text{without}}^{k+1} &=& 
    \text{AUC}_{\text{after}}^k -
    \sum_{b=\beta-1}^\beta
    (\text{FP}_{b+1}^k-\text{FP}_b^{k})
    (\text{TP}_{b+1}^k+\text{TP}_b^{k})/2.
    \label{eq:AUC_without}\\
    \text{AUC}_{\text{at}}^{k+1} &=& 
    \text{AUC}_{\text{without}}^{k+1} +
    (\text{FP}_{\beta+1}^k-\text{FP}_{\beta-1}^{k})
    (\text{TP}_{\beta+1}^k+\text{TP}_{\beta-1}^{k})/2.
    \label{eq:AUC_at}\\
    \text{AUC}_{\text{after}}^{k+1} &=& 
    \text{AUC}_{\text{without}}^{k+1} +
    \sum_{b=\beta-1}^\beta
    (\text{FP}_{b+1}^{k+1}-\text{FP}_b^{k+1})
    (\text{TP}_{b+1}^{k+1}+\text{TP}_b^{k+1})/2.
    \label{eq:AUC_after}
\end{eqnarray}
\begin{proof}
    The proof is analogous to the AUM update rule proof~(\ref{eq:D_diff}--\ref{eq:D_update_term}).
\end{proof}

\subsection{Implementation Details}

The update rules which we proposed in the previous section require identification of a pair of threshold evolution functions which are the next to intersect, $T_{q^k_b}(\sigma_k) = T_{q^k_{b-1}}(\sigma_k)$.
To efficiently perform this identification, we propose an algorithm which begins by sorting the linear $T_b$ functions by intercept, then using intercept/slope values to store intersections of all $B-1$ possible pairs of adjacent functions.
There may be fewer than $B-1$ intersections to store, because some adjacent pairs may have parallel lines, or intersection at negative step sizes.
Typically intersections involve only two lines, but when there are more, they can be handled using the following data structures:
\begin{itemize}
    \item An \textbf{Interval Group} is a collection of lines that intersect at the same step size and threshold.
    \item An \textbf{Interval Column} is an associative array where each key is an intersection threshold, and each value is an Interval Group. This contains all of the intersections at a given step size.
    \item A \textbf{Interval Queue} is a C++ STL map from step sizes to Interval Columns (red-black tree, Figure~\ref{fig:line-search-example}, right).
    The map container allows constant time lookup of the next intersection (Interval Column with smallest step size), and log-linear time insertion of new entries.
\end{itemize}
The algorithm starts by creating an Interval Queue and filling it with all of the intersections between every line and the line after it. Each iteration looks at the first Interval Column in the queue which at the start will be the one with the step size closest to 0. 
We update AUM slope and AUC using Theorems~1--2, and insert up to two new intersections, each of which takes $O(\log B)$ time using the STL map (red-black tree). 
We run this algorithm until we have reached the desired number of iterations, or we have found the first local min AUM, or first local max AUC.
Asymptotic complexity is $O(B)$ space and $O([B+I]\log B)$ time, where $I$ is the number of iterations.
Finally we note that the update rules can be implemented with respect to either the subtrain set (to guarantee decreasing AUM at every step), or with respect to the validation set (to search for max AUC that avoids overfitting).

\section{Empirical Results}

\citet{hillman2023optimizing} provided a detailed empirical study of the AUM loss relative to other baseline loss functions (logistic loss, re-weighting, squared hinge loss defined on all pairs of positive and negative examples), so the empirical study in the current paper focuses on characterizing the time complexity of the proposed line search algorithm.
No special computer/cluster was required for computational experiments.

\label{sec:results}



\begin{figure*}[t]
\vskip 0.2in
\begin{center}

\includegraphics[width=0.8\textwidth]{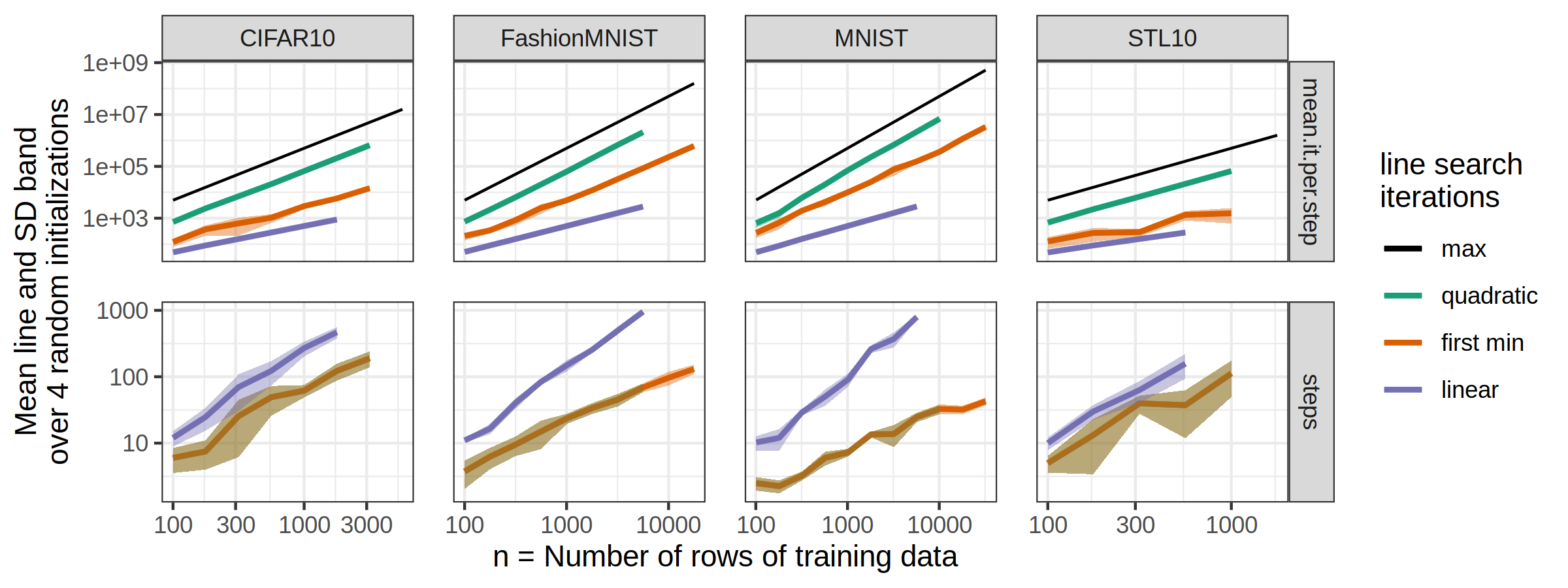}
\vskip -0.4cm
\caption{
Asymptotic time complexity of gradient descent with proposed line search in four binary classification data sets (CIFAR10, FashionMNIST, MNIST, STL10, first class versus others).
\textbf{Top:} number of line search iterations per gradient descent step is $O(n^2)$ in worst case (max);
exploring all intersections is $O(n^2)$ (quadratic);
exploring only the first $n$ is $O(n)$ (linear);
exploring until AUM increases (first min) is sub-quadratic but super-linear.
\textbf{Bottom:}
number of gradient descent steps until AUM stops decreasing (within $10^{-3}$); 
first min/quadratic methods (larger step sizes) take asymptotically fewer steps than linear (smaller step sizes).
}
\label{fig:binary-results}
\end{center}
\vskip -0.2in
\end{figure*}

\subsection{Empirical asymptotic time complexity analysis in benchmark classification data sets}
\paragraph{Goal and expectation.}
In this section, our goal was to empirically estimate the asymptotic complexity of the proposed line search, with the three proposed variants (linear, quadratic, first min, see Figure~\ref{fig:line-search-example}).
For a binary classification data set with $n$ labeled examples (and therefore $B=n$ breakpoints in label error functions), we expected that: 
(1/linear) line search with $n$ iterations should be log-linear time, $O(n\log n)$; 
(2/quadratic) line search exploring all intersections should be quadratic, $O(n^2)$;
(3/first min) line search exploring up until the first min AUM should be faster than quadratic (but guaranteed to find the same solution, due to the convexity of the AUM loss function).

\paragraph{Data and algorithm setup.}
We considered four benchmark binary classification data sets: CIFAR10 \citep{alex2009learning}, FashionMNIST \citep{xiao2017fashion},  MNIST \citep{lecun1998gradient},  STL10 \citep{STL10}.
For each data set, there are ten classes, so we converted them into an unbalanced binary classification problem by using the first class as negative label, and the other classes as positive label. 
For various data sizes $n$ starting with at least 10 examples of the minority class, and then increasing $n$, we randomly initialized the linear model near zero (four random seeds, standard normal), then implemented gradient descent with the three versions of AUM line search (full gradient method, batch size $n$), continuing until the AUM stops decreasing (within $10^{-3}$).

\paragraph{Experimental results.}
In Figure~\ref{fig:binary-results} (top), we show the mean number of line search iterations per gradient descent step, for each of the three line search variants, along with the maximum possible number of iterations for a given $n$ (black line, max intersections of $n$ lines is $n(n-1)/2$, a quadratic upper bound on the number of iterations/step sizes considered by our proposed line search).
Interestingly, the number of line search iterations of the first min variant appears to be sub-quadratic (smaller slope on log-log plot), indicating that performing an exact line search is computationally tractable, nearly log-linear $O(n\log n)$ (amortized average number of iterations over all steps of gradient descent).  
Also, we observed that the number of gradient descent steps for first min is asymptotically small for the first min variant (same as quadratic), whereas the linear variant is relatively large.
Overall, these results suggest that the proposed line search is fast and exact in benchmark binary classification data sets.

\subsection{Accuracy and computation time in supervised changepoint problems}
\paragraph{Motivation and setup.}
We were also interested to examine the accuracy of the line search, as measured by the max validation AUC over all steps of gradient descent.
We expected that the proposed line search should be faster than standard grid search, and be just as accurate.
We tested this expectation using the \texttt{chipseq} supervised changepoint data set from the UCI repository \citep{asuncion2007uci}.
In one representative supervised changepoint data set (H3K4me3\_TDH\_immune), we used 4-fold cross-validation to create train/test splits, then further divided the train set into subtrain/validation sets (four random seeds).
We initialized the linear model by minimizing a L1-regularized convex surrogate of the label error \citep{Hocking2013icml}, with L1 penalty chosen using the validation set, then ran AUM gradient descent using the proposed line search or grid search (using gradients from the subtrain set, batch size $n$), until AUM stops decreasing (within $10^{-5})$.

\paragraph{Experimental results.}
After every step of gradient descent, the AUC was computed on the validation set, and we report the max AUC in Figure~\ref{fig:result-four-changepoint-folds} (bottom).
It is clear that the proposed algorithms achieve a similar level of validation AUC, as the grid search baseline (and all are significantly more accurate than the validation AUC at initialization of gradient descent).
Also, we computed timings (Figure~\ref{fig:result-four-changepoint-folds}, top), and observed that the fastest method was the proposed line search (first min variant).
Interestingly, we observed that the slowest method overall was the linear variant, which stops the line search after $B$ iterations.
Using that method, each iteration of gradient descent is guaranteed to be log-linear $O(B\log B)$, but it takes more time overall because it must take more steps of gradient descent (each of which has a relatively small step size / learning rate).
Overall, these empirical results show that the proposed line search yields useful speedups relative to grid search, when learning a linear model to minimize AUM and maximize AUC.


\begin{figure}
    \centering
    \includegraphics[width=0.9\textwidth]{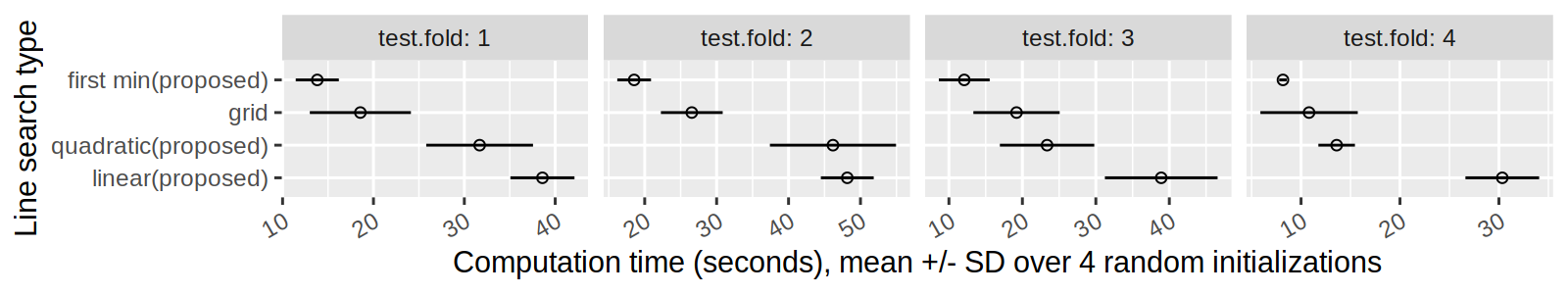}
    \\
    \includegraphics[width=0.9\textwidth]{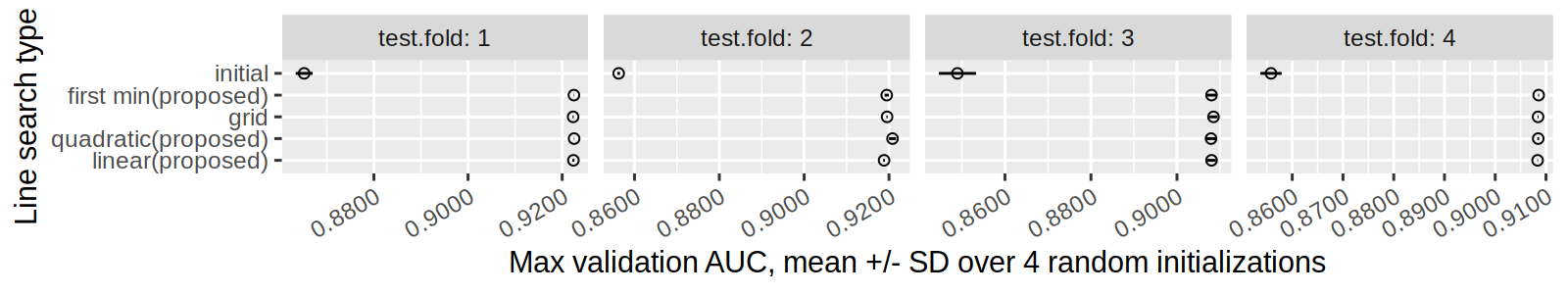}
    \vskip -0.4cm
    \caption{
    we compared the proposed line search to grid search, in terms of computation time (top) and max validation AUC (bottom), in four supervised change-point problems (test folds).
    It is clear that the first min method is consistently fastest, and has comparable values for max validation AUC.}
    \label{fig:result-four-changepoint-folds}
\end{figure}
\section{Discussion and conclusions}
\label{sec:discussion}


This paper proposed a new algorithm for efficient line search, for learning a linear model with gradient descent.
The proposed algorithm exploits the structure of the piecewise linear/constant AUM/AUC, in order to get a complete representation of those functions, for a range of step sizes, which can be used to pick the best learning rate in each step of gradient descent. 
For future work, we are interested in exploring extensions to neural networks with the ReLU activation function, which is piecewise linear, so could potentially be handled using a modification to our proposed algorithm.

\paragraph{Broader Impacts.}
The proposed algorithm could save time if applied to real binary classification or changepoint problems; negative impacts include potential misuse, similar to any algorithm.

\paragraph{Limitations.}
The proposed line search only works for a linear model.

\paragraph{Reproducible Research Statement.} 
All of the software/data to make the figures in this paper can be downloaded from a GitHub repository: \url{https://github.com/tdhock/max-generalized-auc}. 
We also provide a free/open-source C++ implementation of the proposed algorithm, as the function \verb|aum_line_search| in the \verb|aum| R package, on CRAN and \url{https://github.com/tdhock/aum}.

\bibliographystyle{abbrvnat}
\bibliography{refs}

\end{document}